\documentclass[a4paper,twoside]{article}

\usepackage{graphicx}
\usepackage{subcaption}
\usepackage{calc}
\usepackage{amssymb}
\usepackage{amstext}
\usepackage{amsmath}
\usepackage{amsthm}
\usepackage{multicol}
\usepackage{pslatex}
\usepackage{algorithm2e}
\usepackage[bottom]{footmisc}
\usepackage{array,booktabs}
\usepackage{multirow}
\newtheorem{theorem}{Theorem}
\newtheorem{lemma}{Lemma}
\newtheorem{corollary}{Corollary}[lemma]
\usepackage{pifont}

\usepackage{natbib}
\usepackage{verbatim}

\usepackage[hidelinks]{hyperref}

\usepackage{SCITEPRESS}     

\begin{document}

\title{Comparing Euclidean and Hyperbolic K-Means for Generalized Category Discovery}

\author{\authorname{Mohamad Dalal\sup{1}\orcidAuthor{0009-0006-3857-4220}, Thomas B. Moeslund\sup{1,2}\orcidAuthor{0000-0001-7584-5209}, and Joakim Bruslund Haurum\sup{2,3}\orcidAuthor{0000-0002-0544-0422}}
\affiliation{\sup{1}Visual Analysis and Perception Lab, Aalborg University, Aalborg, Denmark}
\affiliation{\sup{2}Pioneer Centre for AI, Denmark, \sup{3}Center for Software Technology, University of Southern Denmark, Vejle, Denmark}
\email{\{moda, tbm\}@create.aau.dk, jhau@mmmi.sdu.dk}}

\keywords{Generalized Category Discovery, Hyperbolic Representation Learning, Hyperbolic K-Means}


\abstract{Hyperbolic representation learning has been widely used to extract implicit hierarchies within data, and recently it has found its way to the open-world classification task of Generalized Category Discovery (GCD).
However, prior hyperbolic GCD methods only use hyperbolic geometry for representation learning and transform back to Euclidean geometry when clustering. We hypothesize this is suboptimal.
Therefore, we present Hyperbolic Clustered GCD (HC-GCD), which learns embeddings in the Lorentz Hyperboloid model of hyperbolic geometry, and clusters these embeddings directly in hyperbolic space using a hyperbolic K-Means algorithm. 
We test our model on the Semantic Shift Benchmark datasets, and demonstrate that HC-GCD is on par with the previous state-of-the-art hyperbolic GCD method. Furthermore, we show that using hyperbolic K-Means leads to better accuracy than Euclidean K-Means. 
We carry out ablation studies showing that clipping the norm of the Euclidean embeddings leads to decreased accuracy in clustering unseen classes, and increased accuracy for seen classes, while the overall accuracy is dataset dependent. We also show that using hyperbolic K-Means leads to more consistent clusters when varying the label granularity. 
}

\onecolumn \maketitle \normalsize \setcounter{footnote}{0} \vfill

\section{\uppercase{Introduction}}
\label{sec:introduction}

Image classification has seen a renascence in recent years, with a shift in focus onto open-world classification. One such task is Generalized Category Discovery (GCD) \citep{Vaze_2022_CVPR}, which poses the challenge of being able to cluster seen and unseen classes simultaneously.

One approach to better classify unseen classes is to utilize the latent hierarchies present within the data, as hierarchies are implicitly present in most data \citep{Ge_2023_CVPR,NEURIPS2021_291d43c6}. Hyperbolic representation learning is often used to learn these hierarchies, and recently some works have explored the use of hyperbolic representation learning to solve the GCD task \citep{Liu2025HypCD, HIDISC}. 

However, all hyperbolic learned non-parametric methods resort to performing clustering on the Euclidean embeddings instead of hyperbolic \citep{Liu2025HypCD, HIDISC}. This can lead to distortion of the learned hierarchies, as Euclidean geometry is not optimal for representing these hierarchies \citep{Gromov1987}. This work aims to explore clustering directly in hyperbolic geometry, by introducing a hyperbolic K-Means algorithm and testing its performance on the Semantic Shift Benchmark \citep{vaze2022the}, the most common GCD benchmark.

To achieve this we introduce Hyperbolic Clustered GCD (HC-GCD), which is a GCD model trained using contrastive learning in the Lorentz Hyperboloid model of hyperbolic geometry. This was chosen instead of the Poincaré model due to the existence of a closed form centroid, which is later used to create clusters using a hyperbolic adaptation of the semi-supervised K-Means algorithm, seen in Figure \ref{fig:Lorentz-KMeans}.\footnote{The code for this paper can be found at \href{https://github.com/MohamadDalal/HC-GCD}{https://github.com/MohamadDalal/HC-GCD}}

\begin{figure}[t]
    \centering
    \includegraphics[width=\linewidth]{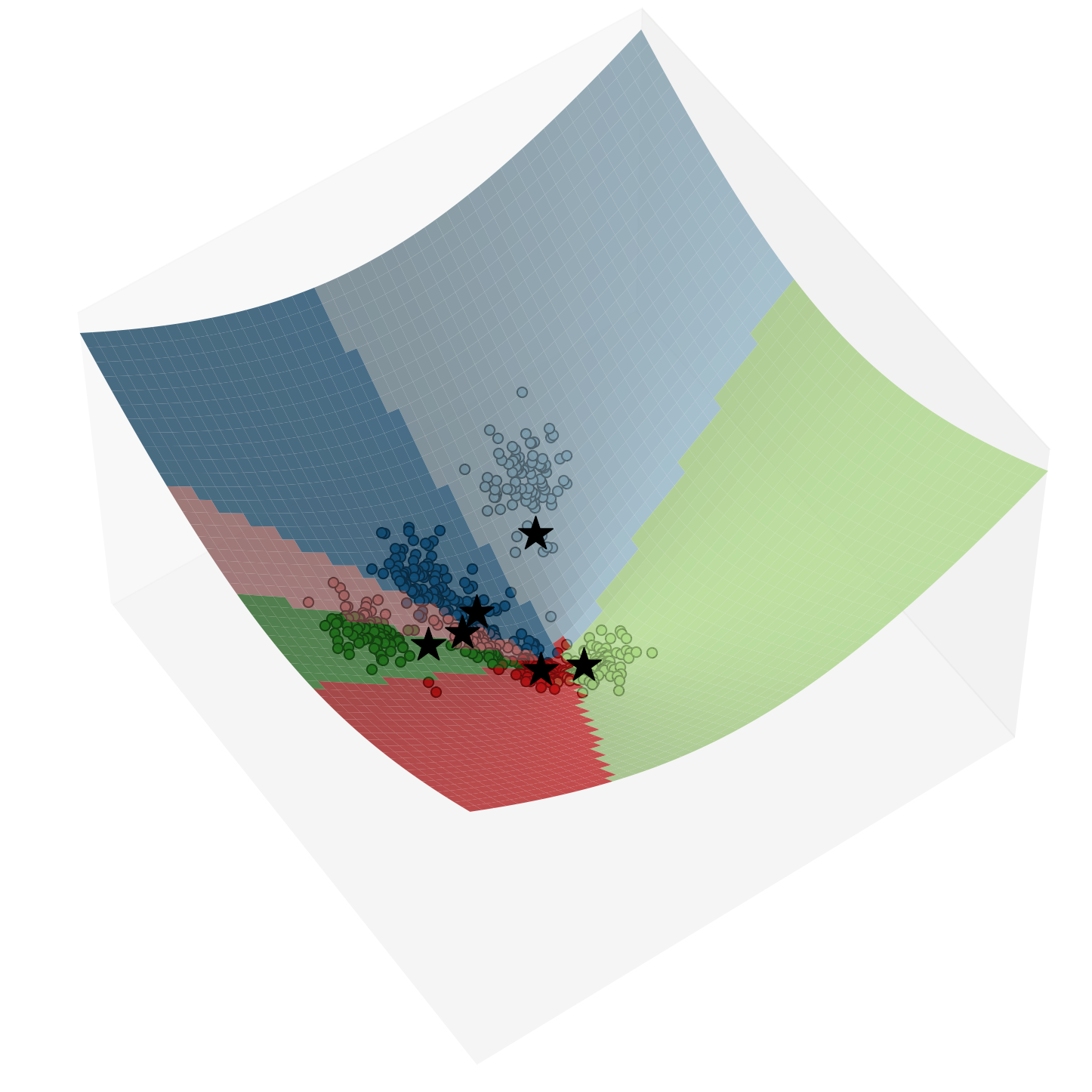}
    \caption{Hyperbolic semi-supervised K-Means in the Lorentz Hyperboloid model of hyperbolic geometry. The centroids are represented as black stars.}
    \label{fig:Lorentz-KMeans}
\end{figure}

Our contributions are as follows:
\begin{itemize}
    \item Creating a hyperbolic K-Means algorithm, and showing its benefits on the GCD task.
    \item Mathematically proving that the Einstein Midpoint is equivalent to the Lorentz Centroid.
    \item Demonstrating that hyperbolic clustering leads to better implicit feature hierarchies, through a label granularity analysis.
\end{itemize}

\section{\uppercase{Related Works}}

\paragraph{Hyperbolic Representation Learning}
Hyperbolic geometry is the geometry of surfaces with constant negative Gaussian curvature. This is in contrast to Euclidean geometry for surfaces with zero Gaussian curvature and hyperspherical geometry for surfaces with constant positive Gaussian curvature \citep{ratcliffe1994foundations}. This leads to differing properties, such as the existence of at least two lines parallel to a stationary line in contrast to a maximum of one in Euclidean geometry, and the exponential growth of areas of spheres in contrast to polynomial growth in Euclidean geometry \citep{PhysRevE.82.036106}. This exponential growth makes hyperbolic geometry particularly suitable for representing hierarchies, which also grow exponentially with increased depth \citep{Gromov1987}. Hyperbolic geometry can be represented by many different models. However, the three most common are the Poincaré Hypersphere, the Klein Hypersphere and the Lorentz Hyperboloid.

Hyperbolic geometry's affinity to representing hierarchies led to it being explored for learning representations of many types of hierarchical data. An example from the text domain is skip-gram by \citet{LeimeisterSkipGram}, where they generate hyperbolic text embeddings by adding a hyperbolic distance objective to Word2Vec \citep{mikolov2013efficientestimationwordrepresentations}. While from the vision domain \citet{Ge_2023_CVPR} learn the hierarchies between scenes and the objects within them, while \citet{Atigh_2022_CVPR} and \citet{Suris_2021_CVPR} use the norm of hyperbolic embeddings to calculate the uncertainty of predictions.
These methods are built upon a Euclidean backbone where the final features are lifted into hyperbolic geometry using the exponential map. There has, however, also been recent work on making fully hyperbolic neural networks \citep{PoincareResNet}. In the intersection of the two modalities is MERU \citep{pmlr-v202-desai23a}, which learns vision-text embeddings in the Lorentz Hyperboloid model of hyperbolic geometry. This is done using hyperbolic distance contrastive loss and hyperbolic entailment loss to learn the hierarchy present between an image and its textual description.

More details can be found in the surveys by \citet{Hyp-DL-Survey}, and \citet{mettes2024hyperbolic}.

\paragraph{Generalized Category Discovery} In 2022 \citet{Vaze_2022_CVPR} proposed the \textit{Generalized Category Discovery} task, where a mix of labeled and unlabeled data has to be clustered according to both seen and unseen classes. \citet{Vaze_2022_CVPR} originally proposed a two stage approach consisting of a representation learning step (self-supervised contrastive and supervised contrastive learning) followed by a clustering step achieved through a semi-supervised K-Means algorithm. Subsequent approaches investigating alternative approaches such as parametric classification \citep{wen2023simgcd}, hierarchal K-Means and self-expertise \citep{RastegarECCV2024}, spatial prompt tuning \citep{wang2024sptnet}, Gaussian Mixture Models \citep{Zhao_2023_ICCV}, student-teacher distillation \citep{vaze2023clevr4}, etc. An in-depth review of the Category Discovery fields was conducted by \citet{CDSurvey}.

There has throughout the GCD research field been an interest in leveraging the underlying hierarchal structure of the dataset in various ways. The SelEx \citep{RastegarECCV2024} method constructed hierarchies through a balanced K-Means approach, SEAL \citep{He2025SEAL} 
proposed a semantic-aware hierarchical learning approach, while InfoSieve \citep{rastegar2023learn} constructed an implicit hierarchy based through an information theoretic lens. 
More recently, two approaches have been proposed which adapt hyperbolic geometry. Firstly, HIDISC \citep{HIDISC} was proposed as a hyperbolic approach for the sub-task of Domain Generalization based on applying CutMix \citep{yun2019cutmix} in the tangent space, Busemann Learning \citep{NEURIPS2021_01259a0c} and hyperbolic contrastive learning. Secondly, HypCD \citep{Liu2025HypCD} explored the use of hyperbolic geometry for GCD. They proposed hyperbolic variants to the original GCD method \citep{Vaze_2022_CVPR}, SimGCD \citep{wen2023simgcd}, and SelEx \citep{RastegarECCV2024}, and trained using both angular and distance based contrastive losses. Through all their experiments they showed clear improvements over the Euclidean variants on standard benchmarks such as the Semantic Shift Benchmark \citep{vaze2022the}.

However, while the HypCD work considers hyperbolic representation learning and parametric classification, the non-parametric clustering approaches are all applied in Euclidean geometry. Therefore, we investigate in depth the effect of performing K-Means clustering in hyperbolic space.

\section{\uppercase{Hyperbolic Geometry}}
Hyperbolic geometry can be represented using different models, with each their own properties. The two models used in this paper are the Lorentz Hyperboloid Model \citep{pmlr-v202-desai23a,ratcliffe1994foundations} for representation learning, and the Klein Model \citep{mao2024kleinmodelhyperbolicneural,ratcliffe1994foundations} for its closed form Einstein Midpoint.
\subsection{The Lorentz Hyperboloid Model}

\begin{figure}[t]
    \centering
    \includegraphics[width=\linewidth]{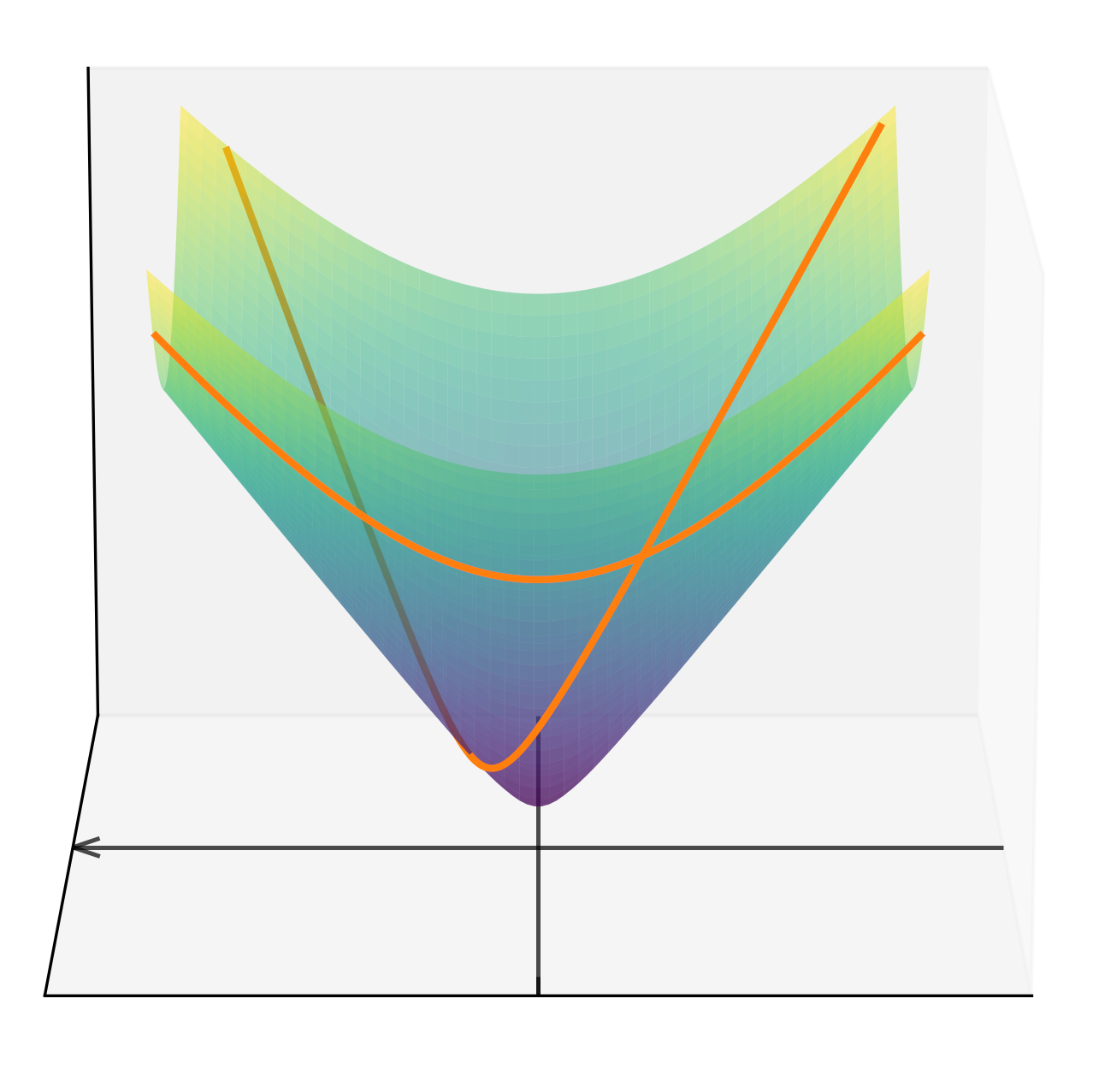}
    \caption{The Lorentz Hyperboloid model $\mathbb{H}^2_1$ as the upper sheet of a hyperboloid. Shown also are geodesics in this model.}
    \label{fig:Lorentz-model}
\end{figure}

Hyperbolic geometry in the Lorentz Hyperboloid model $\mathbb{H}^n_{\kappa}$ is represented as the upper sheet of a hyperboloid in $\mathbb{R}^{n+1}$:

\begin{equation}
    \mathbb{H}_{\kappa}^n=\left\{\mathbf{x}\in\mathbb{R}^{n+1}|\langle\mathbf{x,x}\rangle_{\mathbb{L}}=-\frac{1}{\kappa},x_0>0\right\}
\end{equation}

Where $\langle\mathbf{x,x}\rangle_{\mathbb{L}}=-x_0^2+x_1^2+\cdots +x_n^2$ is the Lorentz inner product and $\kappa=-c$, where c is the curvature. Figure \ref{fig:Lorentz-model} shows $\mathbb{H}^2_1$ as an example.  Borrowing from special relativity theory, a point in the Lorentz Hyperboloid model can be written as $\mathbf{x}_{\mathbb{H}}=[x_{time};\mathbf{x}_{space}]$. Representing the axis of symmetry of the hyperboloid as the time dimension and allowing the Lorentz inner product to be simplified to:
\begin{equation}
    \langle\mathbf{x}_{\mathbb{H}},\mathbf{y}_{\mathbb{H}}\rangle_{\mathbb{L}}=-x_{time}y_{time}+\langle\mathbf{x}_{space},\mathbf{y}_{space}\rangle
\end{equation}
Where, $\langle\mathbf{x},\mathbf{y}\rangle$ is the Euclidean dot product. Additionally, due to the constraint $\langle\mathbf{x,x}\rangle_{\mathbb{L}}=-\frac{1}{\kappa}$ it is possible to calculate the value of $x_{time}$ from $\mathbf{x}_{space}$:
\begin{equation}
    x_{time}=\sqrt{1/\kappa+\langle\mathbf{x}_{space},\mathbf{x}_{space}\rangle}
\label{eq:Lorentz-Time}
\end{equation}
Furthermore, the distance between two points on the Lorentz Hyperboloid is given as:

\begin{equation}
    d_{\mathbb{H}}(\mathbf{x},\mathbf{y})=\sqrt{\frac{1}{\kappa}\cosh^{-1}(-\kappa\langle\mathbf{x},\mathbf{y}\rangle_{\mathbb{L}})}
\label{eq:Lorentz-Dist}
\end{equation}

And lastly, the exponential map can be used to map vectors from the Euclidean tangent space to the Lorentz Hyperboloid:
\begin{equation}
    \mathbf{x}_{space}={expm}(\mathbf{v})=\frac{\sinh(\sqrt{\kappa}\|\mathbf{v}\|)}{\sqrt{\kappa}\|\mathbf{v}\|}\mathbf{v}
\label{eq:Lorentz-exp-map}
\end{equation}

Where $\mathbf{v}\in\mathbb{R}^n$ is a point in Euclidean geometry, and $\|\mathbf{v}\|$ is the L2 norm. While the exponential map only calculates $\mathbf{x}_{space}$, Equation \ref{eq:Lorentz-Time} can be used to find $x_{time}$.

\subsection{The Klein Model}

\begin{figure}[t]
    \centering
    \includegraphics[width=\linewidth]{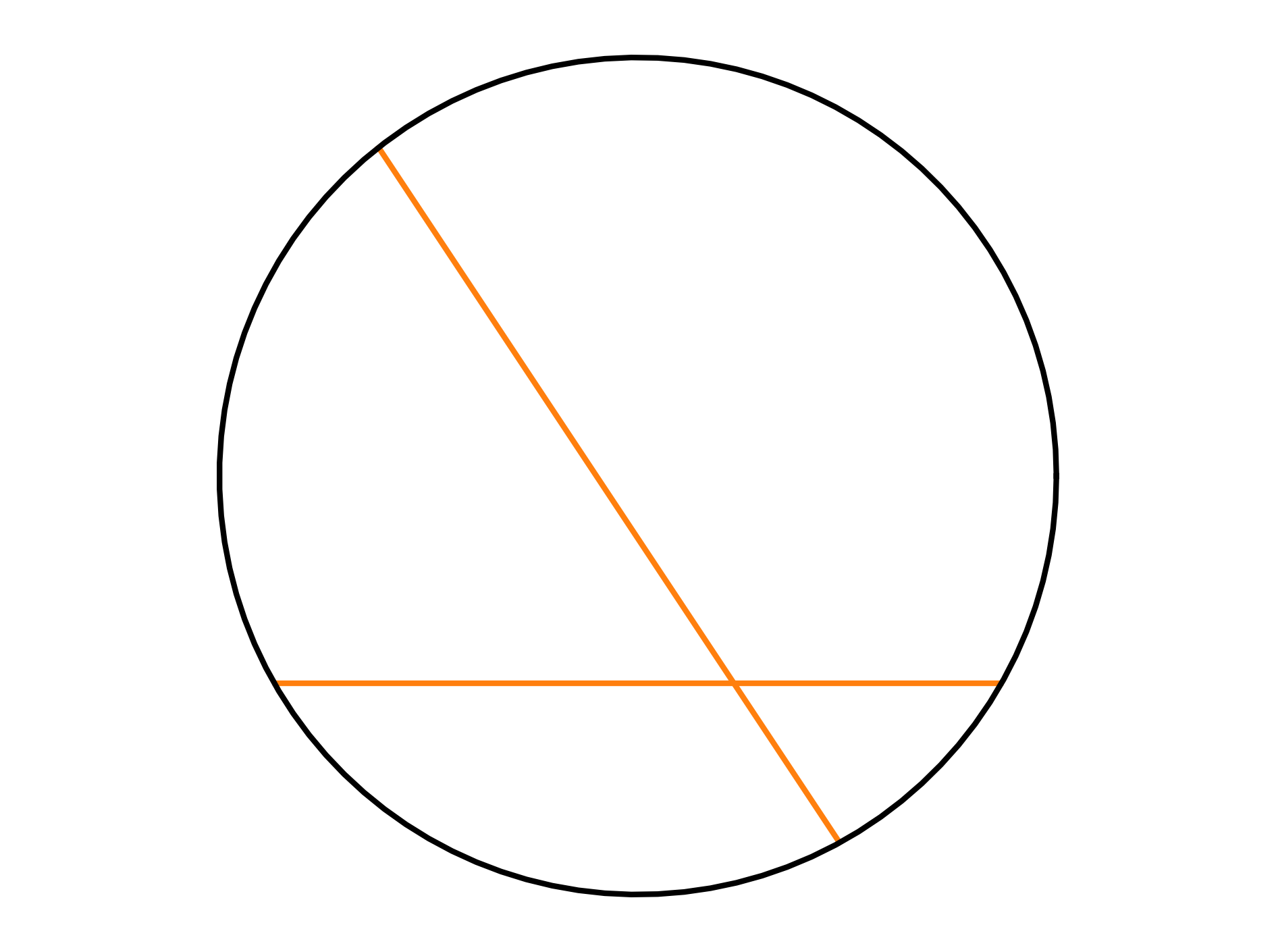}
    \caption{The Klein model $\mathbb{K}^2_1$ as a circle with radius under $1/\kappa$. This model is the projection of the Lorentz Hyperboloid $\mathbb{H}^2_1$ from Figure \ref{fig:Lorentz-model}, which shows how the geodesics project into straight lines.}
    \label{fig:Klein-model}
\end{figure}

The Klein model is constructed by projecting the Lorentz Hyperboloid in $\mathbb{R}^{n+1}$ into an n-dimensional hypersphere of radius $1/\sqrt{\kappa}$ with center at the origo of the hyperboloid $\mathbf{O}_{\mathbb{L}}=[1/\sqrt{\kappa};\mathbf{0}]$:
\begin{equation}
    \mathbf{x}_{\mathbb{K}}=\frac{\mathbf{x}_{space}}{\sqrt{\kappa}x_{time}}
    \label{eq:Lorentz-Klein-Map}
\end{equation}
Where $\mathbf{x}_{\mathbb{K}}$ is a point in the Klein model. This construction makes it isometric to the Lorentz Hyperboloid Model. Being a hypersphere, the set of points in the Klein Model are:
\begin{equation}
    \mathbb{K}^n_{\kappa}=\left\{\mathbf{x}\in\mathbb{R}^n;\|\mathbf{x}\|^2<\frac{1}{\kappa}\right\}
\end{equation}

A special property of the Klein model is that its geodesics are straight lines in $\mathbb{R}^n$, as can be seen in Figure \ref{fig:Klein-model}. This makes it straightforward to define a midpoint in the Klein model \citep{mettes2024hyperbolic}, with one such midpoint being the Einstein Midpoint:
\begin{equation}
    \mu_{\mathbb{K}}=\frac{\sum_{i=1}{\gamma_i\mathbf{x}_{\mathbb{K},i}}}{\sum_{i=1}{\gamma_i}}
\label{eq:Einstein-Midpoint}
\end{equation}

Where $\gamma_i$ are the Lorentz factors:
\begin{equation}
    \gamma_i=\frac{1}{\sqrt{1-\kappa\|\mathbf{x}_{\mathbb{K},i}\|^2}}
\end{equation}

Lastly, the exponential map in the Klein model is:
\begin{equation}
    \mathbf{x}_{\mathbb{K}}={expm}(\mathbf{v})=\frac{\tanh(\sqrt{\kappa}\|\mathbf{v}\|)}{\sqrt{\kappa}\|\mathbf{v}\|}\mathbf{v}, \mathbf{v}\in\mathbb{R}^n
\end{equation}

\section{\uppercase{Hyperbolic K-Means}}

The K-Means clustering algorithm requires a distance function and a centroid. In Euclidean K-Means the Euclidean distance and the arithmetic mean are used. However, in order to use K-Means in hyperbolic geometry, an equivalent distance function and centroid are needed.

For points on the Lorentz Hyperboloid, the distance function $d_{\mathbb{H}}$ from Equation \ref{eq:Lorentz-Dist} can be used. As for the centroid, one option is the Einstein Midpoint in the Klein model found in Equation \ref{eq:Einstein-Midpoint}, which would require mapping between the Lorentz Hyperboloid and Klein Hypersphere. Another option is the Lorentz Centroid \citep{pmlr-v97-law19a} computed directly from points on the Lorentz Hyperboloid:
\begin{equation}
    \mu_{\mathbb{H}}=\frac{1}{\sqrt{\kappa}}\frac{\sum_{i=1}{w_i\mathbf{x}_{\mathbb{H},i}}}{|\|\sum_{i=1}{w_i\mathbf{x}_{\mathbb{H},i}}\|_{\mathbb{L}}|}
\end{equation}
Where $w_i$ are arbitrary sample weights and $\|\mathbf{x}\|_{\mathbb{L}}=\sqrt{\langle \mathbf{x},\mathbf{x}\rangle}_{\mathbb{L}}$ is the Lorentz norm. The Lorentz Centroid minimizes the squared Lorentzian distance between the centroid and all points:
\begin{equation}
    d_{\mathbb{L}}^2(\mathbf{\mu},\mathbf{x})=\|\mathbf{\mu}-\mathbf{x}\|_{\mathbb{L}}^2=-2\frac{1}{\sqrt{\kappa}}-2\langle\mathbf{\mu},\mathbf{x}\rangle_{\mathbb{L}}
\end{equation}

Which does not trivially translate to minimizing the squared Lorentz Hyperboloid distance:
\begin{equation}
    d_{\mathbb{H}}^2(\mathbf{\mu},\mathbf{x})=\frac{1}{\kappa}\cosh^{-1}(-\kappa\langle\mathbf{\mu},\mathbf{x}\rangle_{\mathbb{L}})
\label{eq:Lorentz-Dist-Squared}
\end{equation}
making it uncertain if it can be used as a centroid. However, it is possible to prove that the Einstein Midpoint is the projection of the Lorentz Centroid from the Lorentz Hyperboloid to the Klein Hypersphere:
\begin{lemma}
    The function mapping points from the Klein model to points on the Lorentz Hyperboloid is:
    \begin{equation}
        \pi_{\mathbb{K}\rightarrow\mathbb{H}}(\mathbf{x}_{\mathbb{K}})=\frac{1}{\sqrt{\kappa-\kappa^2\|\mathbf{x}_{\mathbb{K}}\|^2}}[1;\sqrt{\kappa}\mathbf{x}_{\mathbb{K}}]
    \end{equation}
\end{lemma}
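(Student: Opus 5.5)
The plan is to invert the Lorentz-to-Klein map of Equation~\ref{eq:Lorentz-Klein-Map}, using the hyperboloid constraint (Equation~\ref{eq:Lorentz-Time}) to recover the time coordinate from the Klein coordinates.

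\textbf{Step 1: express the spatial part through the time coordinate.} Start from a point $\mathbf{x}_{\mathbb{H}}=[x_{time};\mathbf{x}_{space}]\in\mathbb{H}^n_\kappa$ and its Klein image $\mathbf{x}_{\mathbb{K}}=\mathbf{x}_{space}/(\sqrt{\kappa}x_{time})$. Solving this relation gives
\[
\mathbf{x}_{space}=\sqrt{\kappa}\,x_{time}\,\mathbf{x}_{\mathbb{K}},
\]
so the whole point is determined once $x_{time}$ is known as a function of $\mathbf{x}_{\mathbb{K}}$.

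\textbf{Step 2: solve for $x_{time}$.} Substitute this into the constraint $x_{time}^2 = 1/\kappa + \|\mathbf{x}_{space}\|^2$ to obtain
\[
x_{time}^2\bigl(1-\kappa^2\|\mathbf{x}_{\mathbb{K}}\|^2\bigr)=1/\kappa .
\]
Here I must be careful about normalization. With the paper's stated Klein radius $1/\sqrt{\kappa}$, we have $\|\mathbf{x}_{\mathbb{K}}\|^2<1/\kappa$. The quantity that is then guaranteed positive is $1-\kappa\|\mathbf{x}_{\mathbb{K}}\|^2$, which is exactly the one appearing in the Lorentz factor $\gamma$.

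This is the main obstacle: reconciling the scaling of Equation~\ref{eq:Lorentz-Klein-Map} with the claimed formula. The claimed formula corresponds to the convention $\mathbf{x}_{\mathbb{K}}=\mathbf{x}_{space}/x_{time}$, possibly with a $\sqrt{\kappa}$ scaling. I will first check the target directly: for the claimed $\pi$, compute
\[
\bigl\langle \pi(\mathbf{x}_{\mathbb{K}}),\pi(\mathbf{x}_{\mathbb{K}})\bigr\rangle_{\mathbb{L}}=\frac{-1+\kappa\|\mathbf{x}_{\mathbb{K}}\|^2}{\kappa-\kappa^2\|\mathbf{x}_{\mathbb{K}}\|^2}=-\frac{1}{\kappa}.
\]
So $\pi$ lands on $\mathbb{H}^n_\kappa$, its time coordinate is positive because $\kappa\|\mathbf{x}_{\mathbb{K}}\|^2<1$, and it is well defined on $\mathbb{K}^n_\kappa$.

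\textbf{Step 3: verify $\pi$ is the inverse.} Apply the forward map to $\pi(\mathbf{x}_{\mathbb{K}})$, i.e.\ divide the spatial part by (a multiple of) the time part. The spatial part divided by the time part gives $\sqrt{\kappa}\,\mathbf{x}_{\mathbb{K}}$. I will then fix the normalization of Equation~\ref{eq:Lorentz-Klein-Map} consistently so that this returns exactly $\mathbf{x}_{\mathbb{K}}$, interpreting the $\sqrt{\kappa}$ factors so that the projection matches the Klein ball of radius $1/\sqrt{\kappa}$.

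\textbf{Step 4: uniqueness.} Finally, argue that the inverse is unique: $\mathbf{x}_{space}$ is determined by $x_{time}$ and $\mathbf{x}_{\mathbb{K}}$ (Step 1), and $x_{time}$ is the unique positive root of the quadratic relation from Step 2. This gives a bijection between $\mathbb{K}^n_\kappa$ and $\mathbb{H}^n_\kappa$ with inverse $\pi_{\mathbb{K}\rightarrow\mathbb{H}}$.

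As a last sanity check, note that $\pi(\mathbf{x}_{\mathbb{K}})$ equals $\gamma/\sqrt{\kappa}$ times $[1;\sqrt{\kappa}\mathbf{x}_{\mathbb{K}}]$, with $\gamma$ the Lorentz factor. This is precisely the form that makes the subsequent Einstein-midpoint/Lorentz-centroid equivalence natural, and it confirms the constant conventions.
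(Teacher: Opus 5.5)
\textbf{Gap: an arithmetic slip in Step 2.} This slip, not any convention issue, creates the ``normalization obstacle'' you then try to work around. Substituting $\mathbf{x}_{space}=\sqrt{\kappa}\,x_{time}\mathbf{x}_{\mathbb{K}}$ gives $\|\mathbf{x}_{space}\|^2=\kappa\,x_{time}^2\|\mathbf{x}_{\mathbb{K}}\|^2$, not $\kappa^2x_{time}^2\|\mathbf{x}_{\mathbb{K}}\|^2$. The constraint therefore becomes $x_{time}^2(1-\kappa\|\mathbf{x}_{\mathbb{K}}\|^2)=1/\kappa$. You noticed that the naturally positive quantity is $1-\kappa\|\mathbf{x}_{\mathbb{K}}\|^2$; that mismatch was a symptom of your own algebra.

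The positive root is $x_{time}=1/\sqrt{\kappa-\kappa^2\|\mathbf{x}_{\mathbb{K}}\|^2}$, which is exactly the claimed time component. Step 1 then gives the spatial component $\sqrt{\kappa}\,\mathbf{x}_{\mathbb{K}}/\sqrt{\kappa-\kappa^2\|\mathbf{x}_{\mathbb{K}}\|^2}$. That is the whole proof, and it is the paper's argument: the paper solves $1/\kappa+\kappa x_{time}^2\|\mathbf{x}_{\mathbb{K}}\|^2=x_{time}^2$.

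\textbf{There is no convention mismatch.} Your claim that the formula corresponds to $\mathbf{x}_{\mathbb{K}}=\mathbf{x}_{space}/x_{time}$ is false. That convention would give $\frac{1}{\sqrt{\kappa(1-\|\mathbf{x}_{\mathbb{K}}\|^2)}}[1;\mathbf{x}_{\mathbb{K}}]$. In Step 3 nothing needs fixing either. You correctly found that space over time of $\pi(\mathbf{x}_{\mathbb{K}})$ is $\sqrt{\kappa}\,\mathbf{x}_{\mathbb{K}}$, and the paper's map divides by $\sqrt{\kappa}\,x_{time}$, so it returns $\mathbf{x}_{\mathbb{K}}$ exactly.

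\textbf{As written, the proposal does not close.} Step 4's uniqueness argument uses the quadratic from Step 2. Its positive root, $1/\sqrt{\kappa-\kappa^3\|\mathbf{x}_{\mathbb{K}}\|^2}$, disagrees with the time coordinate of $\pi$, so Steps 3 and 4 contradict each other. Your plan to ``fix the normalization,'' taken literally, would also change the statement being proved.

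With the slip corrected, your verification strand is a worthwhile addition that the paper omits. It consists of three checks: $\langle\pi(\mathbf{x}_{\mathbb{K}}),\pi(\mathbf{x}_{\mathbb{K}})\rangle_{\mathbb{L}}=-1/\kappa$ with positive time coordinate; the forward map composed with $\pi$ is the identity; and the positive root is unique. The paper only derives what the inverse must be for points already known to be images of the hyperboloid. Your checks additionally show that $\pi$ is defined on all of $\mathbb{K}^n_\kappa$ and is a two-sided inverse.
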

\begin{corollary}
    The Lorentz factors in the Einstein Midpoint can be written as:
    \begin{equation}
        \gamma_i=\frac{\sqrt{\kappa}}{\sqrt{\kappa-\kappa^2\|\mathbf{x}_{\mathbb{K}}\|^2}}=\sqrt{\kappa}x_{time}
    \end{equation}
\end{corollary}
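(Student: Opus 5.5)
The plan is to derive the corollary directly from the Lemma, by comparing the time coordinate of $\pi_{\mathbb{K}\rightarrow\mathbb{H}}(\mathbf{x}_{\mathbb{K}})$ with the definition of the Lorentz factor. There are two equalities to establish: $\gamma_i=\sqrt{\kappa}/\sqrt{\kappa-\kappa^2\|\mathbf{x}_{\mathbb{K},i}\|^2}$, and that this quantity equals $\sqrt{\kappa}\,x_{time,i}$. Throughout, $x_{time,i}$ is the time coordinate of the hyperboloid point corresponding to $\mathbf{x}_{\mathbb{K},i}$.

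For the first equality, start from $\gamma_i=1/\sqrt{1-\kappa\|\mathbf{x}_{\mathbb{K},i}\|^2}$. Multiply numerator and denominator by $\sqrt{\kappa}$, which is legitimate since $\kappa>0$. Then use $\sqrt{\kappa}\sqrt{1-\kappa\|\mathbf{x}_{\mathbb{K},i}\|^2}=\sqrt{\kappa-\kappa^2\|\mathbf{x}_{\mathbb{K},i}\|^2}$. This is purely algebraic. The radicand is positive because points of $\mathbb{K}^n_\kappa$ satisfy $\|\mathbf{x}_{\mathbb{K}}\|^2<1/\kappa$.

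For the second equality, read off the first (time) component of the Lemma's formula: $x_{time}=1/\sqrt{\kappa-\kappa^2\|\mathbf{x}_{\mathbb{K}}\|^2}$. Multiplying by $\sqrt{\kappa}$ gives exactly the middle expression. The Lemma is stated as the inverse of Equation \ref{eq:Lorentz-Klein-Map}, so the time coordinate it produces is the $x_{time}$ of the original hyperboloid point.

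As a sanity check independent of the Lemma, substitute $\mathbf{x}_{\mathbb{K}}=\mathbf{x}_{space}/(\sqrt{\kappa}x_{time})$ into $1-\kappa\|\mathbf{x}_{\mathbb{K}}\|^2$. This gives $(x_{time}^2-\|\mathbf{x}_{space}\|^2)/x_{time}^2$. By Equation \ref{eq:Lorentz-Time} the numerator equals $1/\kappa$, so $\gamma_i=\sqrt{\kappa}\,x_{time}$ directly.

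No step is a real obstacle. The only care needed is tracking the curvature constant $\kappa$ consistently, since the distance and time formulas use $1/\kappa$ conventions, and confirming that all radicands are positive.
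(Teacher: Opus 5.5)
Your proposal is correct and follows the paper's route. The corollary comes from reading off the time component $x_{time}=1/\sqrt{\kappa-\kappa^2\|\mathbf{x}_{\mathbb{K}}\|^2}$ of $\pi_{\mathbb{K}\rightarrow\mathbb{H}}$ in Lemma 1 and rewriting $\gamma_i=1/\sqrt{1-\kappa\|\mathbf{x}_{\mathbb{K},i}\|^2}$ by multiplying numerator and denominator by $\sqrt{\kappa}$; your extra substitution check via Equation \ref{eq:Lorentz-Time} is also valid and avoids the Lemma entirely.
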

The proof for Lemma 1 can be found in the appendix.
\begin{theorem}
    The Einstein Midpoint $\mu_{\mathbb{K}}$ can be found by passing the Lorentz Centroid $\mu_{\mathbb{H}}$ through the map $\pi_{\mathbb{H}\rightarrow\mathbb{K}}$:
    \begin{equation}
        \mu_{\mathbb{K}}=\pi_{\mathbb{H}\rightarrow\mathbb{K}}(\mu_{\mathbb{H}})
    \end{equation}
\label{th:Midpoint-Equivalence}
\end{theorem}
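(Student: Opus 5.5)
The plan is a direct computation. We push the Lorentz Centroid through the Lorentz-to-Klein map of Equation \ref{eq:Lorentz-Klein-Map}, $\pi_{\mathbb{H}\rightarrow\mathbb{K}}(\mathbf{x}_{\mathbb{H}})=\mathbf{x}_{space}/(\sqrt{\kappa}x_{time})$, and compare the result with the Einstein Midpoint after rewriting the latter using Lemma 1 and its Corollary. We take the weights in $\mu_{\mathbb{H}}$ to be uniform, $w_i=1$, or more generally equal to whatever weights the Einstein Midpoint's numerator and denominator share. The identity is then purely algebraic.

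\textbf{Step 1: the centroid in the Klein model.} Write $\mathbf{S}=\sum_i w_i\mathbf{x}_{\mathbb{H},i}$, so that $S_{time}=\sum_i w_i x_{time,i}$ and $\mathbf{S}_{space}=\sum_i w_i\mathbf{x}_{space,i}$. The Lorentz Centroid is $\mu_{\mathbb{H}}=\mathbf{S}/(\sqrt{\kappa}\,|\|\mathbf{S}\|_{\mathbb{L}}|)$, which is a positive scalar multiple of $\mathbf{S}$. The map $\pi_{\mathbb{H}\rightarrow\mathbb{K}}$ is a ratio of the space part to the time part, so it is invariant under positive rescaling. We therefore never need to evaluate the Lorentz norm, and get
\[
\pi_{\mathbb{H}\rightarrow\mathbb{K}}(\mu_{\mathbb{H}})=\frac{\sum_i w_i\mathbf{x}_{space,i}}{\sqrt{\kappa}\sum_i w_i x_{time,i}}.
\]
Before using this, we check that $\mu_{\mathbb{H}}$ lies on $\mathbb{H}^n_\kappa$, so that the map applies. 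A positive combination of future-pointing timelike vectors is again timelike with positive time component, so $\langle\mathbf{S},\mathbf{S}\rangle_{\mathbb{L}}<0$ and $S_{time}>0$. Normalizing then gives $\langle\mu_{\mathbb{H}},\mu_{\mathbb{H}}\rangle_{\mathbb{L}}=-1/\kappa$.

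\textbf{Step 2: the Einstein Midpoint in Lorentz coordinates.} By Lemma 1, each point satisfies $x_{time,i}=1/\sqrt{\kappa-\kappa^2\|\mathbf{x}_{\mathbb{K},i}\|^2}$ and $\mathbf{x}_{space,i}=\sqrt{\kappa}\,x_{time,i}\,\mathbf{x}_{\mathbb{K},i}$. By the Corollary, $\gamma_i=\sqrt{\kappa}\,x_{time,i}$, and hence $\gamma_i\mathbf{x}_{\mathbb{K},i}=\mathbf{x}_{space,i}$. Substituting into Equation \ref{eq:Einstein-Midpoint}, with the same weights multiplying $\gamma_i$, gives
\[
\mu_{\mathbb{K}}=\frac{\sum_i w_i\mathbf{x}_{space,i}}{\sqrt{\kappa}\sum_i w_i x_{time,i}}.
\]
This coincides with the expression from Step 1, which proves the theorem.

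\textbf{Main obstacle.} The only delicate point is the well-definedness argument in Step 1: showing that $\langle\mathbf{S},\mathbf{S}\rangle_{\mathbb{L}}<0$, so that the Lorentz norm is nonzero and the normalization is legitimate. We plan to prove it with the reverse Cauchy–Schwarz inequality. For two points on the hyperboloid, $\langle\mathbf{x},\mathbf{y}\rangle_{\mathbb{L}}\le -1/\kappa<0$, because Cauchy–Schwarz gives $\langle\mathbf{x}_{space},\mathbf{y}_{space}\rangle\le\|\mathbf{x}_{space}\|\|\mathbf{y}_{space}\|<x_{time}y_{time}$. Expanding $\langle\mathbf{S},\mathbf{S}\rangle_{\mathbb{L}}$ as a sum of pairwise terms, every term is negative when the weights are nonnegative and not all zero. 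The rest is the cancellation in Step 1, which requires only that the projection be scale-invariant.
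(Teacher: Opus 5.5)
Your proof is correct and follows essentially the same route as the paper: you split the centroid into time and space parts, let the normalizing factor cancel in the ratio $\mathbf{x}_{space}/(\sqrt{\kappa}x_{time})$, and use the Corollary $\gamma_i=\sqrt{\kappa}x_{time,i}$ to rewrite the Einstein Midpoint as the same quotient. Your check that $\langle\mathbf{S},\mathbf{S}\rangle_{\mathbb{L}}<0$ for nonnegative weights, which makes $\mu_{\mathbb{H}}$ well defined and places it on $\mathbb{H}^n_\kappa$, is something the paper omits; note that your Cauchy--Schwarz line only gives $\langle\mathbf{x},\mathbf{y}\rangle_{\mathbb{L}}<0$, not $\le -1/\kappa$, but strict negativity is all you need.
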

\begin{proof}
    The Lorentz centroid can be split into its separate space and time components as:
    \begin{gather}
        \mu_{time}=\frac{1}{\sqrt{\kappa}}\frac{\sum_{i=1}{w_ix_{time}}}{|\|\sum_{i=1}{w_i\mathbf{x}_{\mathbb{H},i}}\|_{\mathbb{L}}|}\\
        \mu_{space}=\frac{1}{\sqrt{\kappa}}\frac{\sum_{i=1}{w_i\mathbf{x}_{space,i}}}{|\|\sum_{i=1}{w_i\mathbf{x}_{\mathbb{H},i}}\|_{\mathbb{L}}|}
    \end{gather}
    Using these components the mapping $\pi_{\mathbb{H}\rightarrow\mathbb{K}}(\mu_{\mathbb{H}})$ evaluates to:
    \begin{equation}
        \pi_{\mathbb{H}\rightarrow\mathbb{K}}(\mu_{\mathbb{H}})=\frac{\mu_{space}}{\sqrt{\kappa}\mu_{time}}=\frac{\sum_{i=1}{w_i\mathbf{x}_{space,i}}}{\sqrt{\kappa}\sum_{i=1}{w_ix_{time}}}
    \end{equation}
    Which is equal to the Einstein Midpoint if $w_i=1,\forall i$:
    \begin{gather}
        \mathbf{\mu}_{\mathbb{K}}=\frac{\sum_{i=1}{\gamma_i\mathbf{x}_{\mathbb{K},i}}}{\sum_{i=1}{\gamma_i}}\\
        \mathbf{\mu}_{\mathbb{K}}=\frac{\sum_{i=1}{\sqrt{\kappa}x_{time,i}\frac{\mathbf{x}_{space, i}}{\sqrt{\kappa}x_{time,i}}}}{\sum_{i=1}{\sqrt{\kappa}x_{time,i}}}\\
        \mathbf{\mu}_{\mathbb{K}}=\frac{\sum_{i=1}{\mathbf{x}_{space,i}}}{\sqrt{\kappa}\sum_{i=1}{x_{time}}}=\pi_{\mathbb{H}\rightarrow\mathbb{K}}(\mu_{\mathbb{H}})
    \end{gather}
\end{proof}
With this proof the two choices become one, and the Lorentz Centroid can be used as a centroid for the hyperbolic K-Means algorithm without having to transform between hyperbolic models. Furthermore, this proof makes it possible to create an equivalent algorithm for the Poincaré model, used in an ablation study in Section \ref{subsec:ablation}, by utilizing the Einstein Midpoint as a centroid. Figure \ref{fig:Lorentz-KMeans} shows an example of the hyperbolic semi-supervised K-Means algorithm in $\mathbb{H}^2_1$.

\section{\uppercase{Method}}

\subsection{GCD Task Definition}
Generalized Category Discovery (GCD) is an open world classification task, where an algorithm has to cluster a dataset with points belonging to either seen or unseen classes. This is achieved by splitting a dataset into two subsets, an unlabeled subset $D_{\mathcal{U}}:\{x_i, y_i\}\in X, Y_{\mathcal{U}}$ and a labeled subset $D_{\mathcal{L}}:\{x_i, y_i\}\in X,Y_{\mathcal{L}}$ where $ Y_{\mathcal{L}}\subset Y_{\mathcal{U}}$. The training set includes samples from both $D_{\mathcal{L}}$ and $D_{\mathcal{U}}$, but only labels from $D_{\mathcal{L}}$, resulting in a need for a combination of unsupervised and supervised learning. During testing, the algorithm needs to classify samples belonging to both seen and unseen classes \citep{Vaze_2022_CVPR}.

\subsection{HC-GCD}
\begin{figure*}[t]
    \centering
    \includegraphics[width=\linewidth]{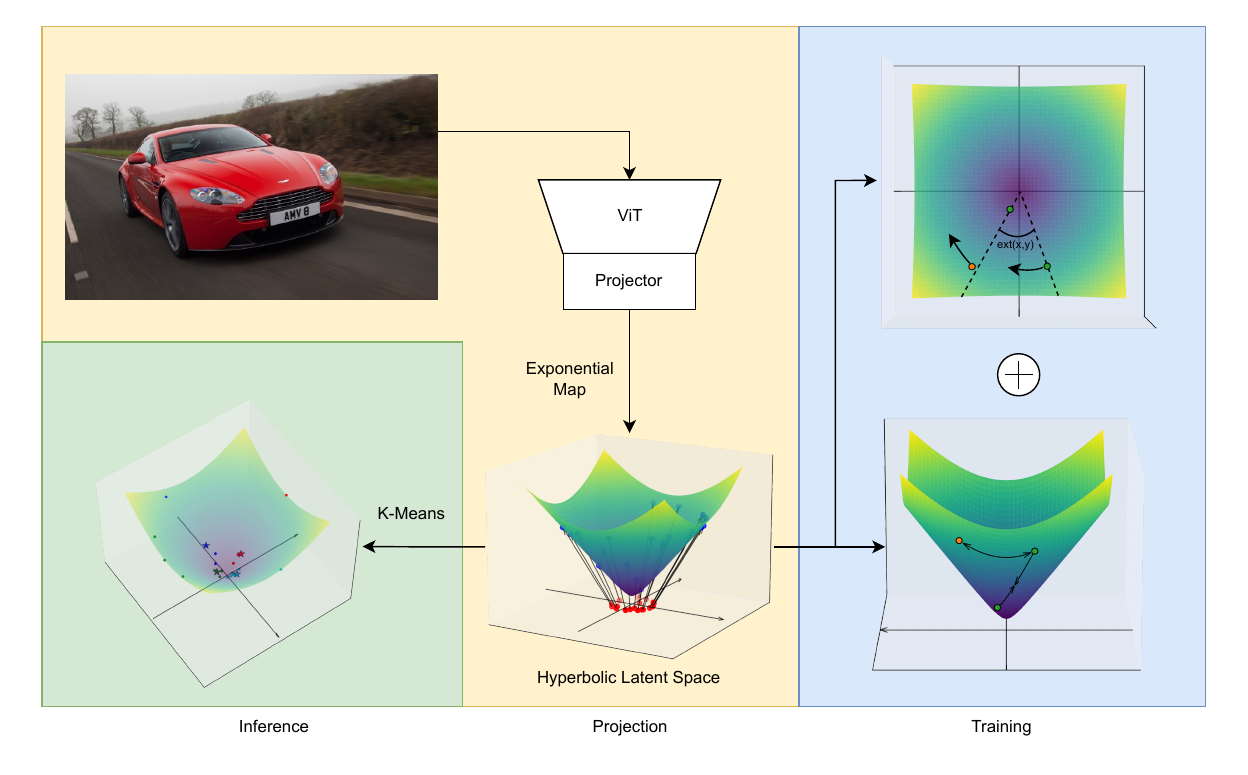}
    \caption{Full pipeline of HC-GCD. Training uses combination of distance and angle based contrastive loss. After training is finished, a hyperbolic semi-supervised K-Means algorithm is used for clustering the embeddings in the Lorentz Hyperboloid.}
    \label{fig:HC-GCD-Inference}
\end{figure*}

The Hyperbolic Clustered GCD (HC-GCD) model closely follows the HypCD implementation of the GCD model (Hyp-GCD) by \citet{Liu_2025_CVPR}. The backbone uses a Vision Transformer (ViT) \citep{ViT-Paper} and a projector head, followed by clipping the norm of the Euclidean embeddings \citep{Guo_2022_CVPR} and transformation into hyperbolic space using an exponential map. While Hyp-GCD transforms to the Poincaré Hypersphere, HC-GCD transforms to the Lorentz Hyperboloid.

The training procedure also mirrors that of Hyp-GCD, using a combination of distance based and angle based self-supervised and supervised contrastive losses. Given the contrastive score between two samples $\sigma(i,\mathbf{y}_{\mathbb{H}})$, where $i$ is the index of the sample, and $\mathbf{y}$ is the positive sample. The self-supervised and supervised contrastive losses are:
\begin{gather}
    \mathcal{L}_{\sigma,i}^u=-\log{(\sigma(i,\mathbf{z'}_{\mathbb{H},i}))}\\
    \mathcal{L}_{\sigma,i}^s=-\frac{1}{|\mathcal{N}(i)|}\sum_{q\in \mathcal{N}(i)}{\log{(\sigma(i,\mathbf{z}_{\mathbb{H},q}))}}
\end{gather}
Where $\mathbf{z}_{\mathbb{H},i}$ and $\mathbf{z'}_{\mathbb{H},i}$ are embeddings from two views, where each view is an augmentation of sample $i$, and $\mathcal{N}(i)$ is a set of indices belonging to the same class as index $i$.
For the distance based contrastive loss, the contrastive score uses the negative hyperbolic distance as a similarity metric:
\begin{equation}
    \sigma_{D}(i,\mathbf{y})=\frac{\exp(-d_{\mathbb{H}}(\mathbf{z}_{\mathbb{H},i}\mathbf{y})/\tau)}{\sum_{n}{\mathbf{1}_{[n\neq i]}\exp(-d_{\mathbb{H}}(\mathbf{z}_{\mathbb{H},i}\mathbf{z}_{\mathbb{H},n})/\tau)}}
\end{equation}
Where $\tau$ is a temperature constant and $\mathbf{1}_{[n\neq i]}$ is an indicator function that evaluates to 1 if and only if $n\neq i$.

For the angle based losses the exterior angle from MERU \citep{pmlr-v202-desai23a} is utilized:
\begin{gather}
    ext(\mathbf{x},\mathbf{y})=\cos^{-1}\left(\frac{y_{time}+x_{time}\kappa\langle\mathbf{x},\mathbf{y}\rangle_{\mathbb{L}}}{\|\mathbf{x}_{space}\|\sqrt{(\kappa\langle\mathbf{x},\mathbf{y}\rangle_{\mathbb{L}})^2-1}}\right)\\
    \sigma_A(i,\mathbf{y})=\frac{\exp(ext(\mathbf{z}_{\mathbb{H},i}\mathbf{y})/\tau)}{\sum_{n}{\mathbf{1}_{[n\neq i]}\exp(ext(\mathbf{z}_{\mathbb{H},i}\mathbf{z}_{\mathbb{H},n})/\tau)}}
\end{gather}

The total training loss is a weighted combination of all four losses:
\begin{gather}
    \mathcal{L}^s=((1-\alpha)\mathcal{L}^s_{\sigma_D}+\alpha\mathcal{L}^s_{\sigma_A})\\
    \mathcal{L}^u=((1-\alpha)\mathcal{L}^u_{\sigma_D}+\alpha\mathcal{L}^u_{\sigma_A})\\
    \mathcal{L}=(1-\lambda)\mathcal{L}^s+\lambda\mathcal{L}^u
\end{gather}
With $\lambda$ being the weighting hyperparameter between self-supervised and supervised contrastive loss, and $\alpha$ being the weighting hyperparameter between distance and angle-based losses. Similar to Hyp-GCD the $\alpha$ weight is linearly decayed from 1 to 0 as training progresses.

After finished training, the hyperbolic K-Means algorithm is used to cluster the embeddings in the Lorentz Hyperboloid present by the model. Figure \ref{fig:HC-GCD-Inference} shows the overall HC-GCD model pipeline. 

\section{\uppercase{Experimental Design}}

The experiments were carried on the Semantic Shift Benchmark (SSB) datasets \citep{vaze2022the}, namely CUB \citep{wah2011caltech}, Stanford Cars \citep{Krause_2013_ICCV_Workshops} and FGVC-Aircraft \citep{maji2013fine}. Following the original GCD setup, half of the classes are assigned as seen classes $Y_{\mathcal{L}}$. Thereafter, the labeled dataset $D_{\mathcal{L}}$ is comprised from 50\% of the samples belonging to the seen classes. The remaining samples from both seen and unseen classes create the unlabeled dataset $D_{\mathcal{U}}$ \citep{Vaze_2022_CVPR}.

The HC-GCD model uses a ViT-B14 \citep{ViT-Paper} pre-trained with DINOv2 \citep{oquab2024dinov2learningrobustvisual} as baseline followed by a projection head comprised of 4 linear layers and GELU \citep{hendrycks2023gaussianerrorlinearunits} activations according to this setup:
\begin{equation}
    \mathbb{R}^I\xrightarrow{GELU}\mathbb{R}^{2048}\xrightarrow{GELU}\mathbb{R}^{2048}\xrightarrow{GELU}\mathbb{R}^{256}\rightarrow\mathbb{R}^{256}
\end{equation}
Where $I$ is the input dimension for embeddings from the ViT. The Euclidean embeddings outputted by the projection head are then clipped to a norm of 2.3. These clipped embeddings are then transformed to the Lorentz Hyperboloid with a fixed curvature of $c=-0.05$ using the exponential map. These two values are chosen because they were shown to perform best for the SSB datasets in the HypCD paper \citep{Liu_2025_CVPR}

During training, gradient clipping is used to avoid exploding gradients. The gradients are clipped to a max absolute value of 1.0, then the gradients are scaled to have a max absolute average of 0.25. Furthermore, a cosine annealing scheduler is used, with a maximum learning rate of 0.1 and a minimum of 0.0001 with a SGD optimizer. 

For the losses a temperature value of $\tau=0.07$ is used and the weight factor $\alpha$ is linearly decayed from 1 to 0, while the other weight factor is $\lambda=0.35$. The model is trained with a batch size of 128 and for 200 epochs and all models are trained on an Nvidia L40s GPU.

The model with the lowest loss is clustered using both hyperbolic K-Means and Euclidean K-Means. During Euclidean evaluation the projection head is removed, and embeddings directly from the ViT are clustered. The results from the K-Means clustering are then evaluated using clustering accuracy, which was defined by Vaze et al. \citep{Vaze_2022_CVPR}. Three accuracy metrics are provided, "All" accuracy calculated using all classes, "Old" accuracy calculated only using seen classes and "New" accuracy calculated only using unseen classes. The full training set is used for clustering, while accuracy is evaluated on only the unlabeled training set as per convention within the GCD field \citep{Vaze_2022_CVPR, Liu2025HypCD, wen2023simgcd}.

\begin{table*}[t]
\caption{Mean and standard deviation of the accuracy of the GCD models on the fine-grained SSB datasets. Lorentz contrastive training refers to HC-GCD while Poincaré refers to Hyp-GCD. For the HC-GCD, performance is reported using both the Euclidean and hyperbolic K-Means algorithms. The highest mean accuracy for each column is highlighted in bold, while the second highest is underlined, while the equivalent is done for the lowest standard deviations.}
\label{tab:hgcd-performance}
\resizebox{\linewidth}{!}{
\begin{tabular}{c|c|ccccccccc}
\toprule
Contrastive & K-Means   & \multicolumn{3}{c}{CUB} & \multicolumn{3}{c}{Stanford Cars} & \multicolumn{3}{c}{FGVC-Aircraft} \\
Training    & Algorithm & All & Old & New & All & Old & New & All & Old & New \\
\midrule
\multicolumn{11}{c}{Mean (Higher is Better)}\\
\midrule
Lorentz & Lorentz & \textbf{71.70} & \textbf{76.34} & 69.38 & \textbf{73.69} & \textbf{82.41} & \textbf{69.49} & \textbf{61.90} & \underline{61.85} & \textbf{61.93} \\
Lorentz & Euclidean & 70.22 & 70.22 & \underline{70.22} & \underline{70.74} & \underline{79.09} & \underline{66.71} & \underline{61.31} & \textbf{64.31} & \underline{59.81} \\
Poincaré   & Euclidean & \underline{71.52} & \underline{72.56} & \textbf{71.00} & 68.31 & 75.46 & 64.85 & 59.36 & 59.38 & 59.34 \\
\midrule
\multicolumn{11}{c}{Standard Deviation (Lower is Better)}\\
\midrule
Lorentz & Lorentz & \textbf{00.55} & \underline{00.93} & \textbf{01.24} & \textbf{00.37} & \textbf{00.57} & \textbf{00.71} & \underline{02.49} & 04.04 & \textbf{01.79} \\
Lorentz & Euclidean & \underline{01.29} & \textbf{00.74} & \underline{01.91} & \underline{00.81} & 00.90 & \underline{00.78} & 02.65 & \underline{02.56} & 02.69 \\
Poincaré   & Euclidean & 02.06 & 01.36 & 02.62 & 01.25 & \underline{00.61} & 01.63 & \textbf{02.36} & \textbf{02.49} & \underline{02.30} \\
\bottomrule
\end{tabular}
}
\end{table*}
\section{\uppercase{Results}}
To compare our results, we trained and evaluated Hyp-GCD locally as a baseline.
Table \ref{tab:hgcd-performance} shows the mean accuracy from three different seeds. HC-GCD outperforms in two out of three datasets showing potential in using the Lorentz Hyperboloid model for the GCD task. Furthermore, using the hyperbolic K-Means algorithm leads to improvement on almost all metrics, showing its superiority to Euclidean K-Means when training in the Lorentz Hyperboloid model of hyperbolic geometry.

Table \ref{tab:hgcd-performance} also shows the standard deviation of the reported accuracies. This table shows improved training stability when using the Lorentz Hyperboloid model, as accuracies across seeds are more consistent.

\subsection{Ablations}\label{subsec:ablation}

\begin{table*}[t]
\caption{Mean accuracy of the HC-GCD models when trained with a Euclidean clipping of $r=2.3$ and without any Euclidean clipping. The highest accuracy for each column is highlighted in bold, while the second highest is underlined.}
\label{tab:hgcd-Clipping-Lorentz}
\resizebox{\linewidth}{!}{
\begin{tabular}{c|c|ccccccccc}
\toprule
Euclidean & K-Means   & \multicolumn{3}{c}{CUB} & \multicolumn{3}{c}{Stanford Cars} & \multicolumn{3}{c}{FGVC-Aircraft} \\
Clipping    & Algorithm & All & Old & New & All & Old & New & All & Old & New \\
\midrule
\ding{51} & Lorentz   & \textbf{71.70} & \textbf{76.34} & \underline{69.38} & \textbf{73.69} & \underline{82.41} & \textbf{69.49} & 61.90 & 61.85 & \textbf{61.93} \\
\ding{55} & Lorentz   & 68.88 & \underline{73.34} & 66.66 & 70.20 & \textbf{84.69} & 63.20 & \textbf{64.72} & \textbf{75.17} & 59.59 \\
\ding{51} & Euclidean & \underline{70.22} & 70.22 & \textbf{70.22} & \underline{70.74} & 79.09 & \underline{66.71} & 61.31 & 64.31 & 59.81 \\
\ding{55} & Euclidean & 69.28 & 71.16 & 68.34 & 68.77 & 80.19 & 63.25 & \underline{64.18} & \underline{72.15} & \underline{60.20} \\
\bottomrule
\end{tabular}
}
\end{table*}
\paragraph{Euclidean Clipping} \citet{Guo_2022_CVPR} show that clipping the Euclidean embeddings leads to improved performance on models trained in the Poincaré Hypersphere, as unclipped embeddings lead to vanishing gradient problems. However, this does not necessarily apply to models trained in the Lorentz Hyperboloid. Furthermore, \citet{pmlr-v202-desai23a} do not utilize embedding clipping when training MERU. Therefore, an ablation study is carried to investigate the impact of using clipped embeddings when training in the Lorentz Hyperboloid.

Table \ref{tab:hgcd-Clipping-Lorentz} shows the results of this ablation. These results show that the performance from using embedding clipping is dataset dependant, as tests on FGVC-Aircraft gain an accuracy improvement when clipping is not used. Furthermore, a pattern can be seen with the old and new accuracy, as removing embedding clipping leads to increased accuracy on old classes, and a decreased accuracy on new classes most of the times.

\begin{table*}[t]
\caption{Mean accuracy of models trained on a ViT with registers backbone and a projector without its last layer. Lorentz contrastive training refers to HC-GCD while Poincaré refers to Hyp-GCD. Percent difference with models trained without registers and with the last layer of the projector present are reported under each result. The highest accuracy for each column is highlighted in bold, while the second highest is underlined.}
\label{tab:ablation-registers-mean}
\resizebox{\linewidth}{!}{
\begin{tabular}{c|c|ccccccccc}
\toprule
Contrastive & K-Means   & \multicolumn{3}{c}{CUB} & \multicolumn{3}{c}{Stanford Cars} & \multicolumn{3}{c}{FGVC-Aircraft} \\
Training    & Algorithm & All & Old & New & All & Old & New & All & Old & New \\
\midrule
Lorentz & Lorentz & \underline{71.59} & \textbf{77.65} & 68.56 & \textbf{74.79} & \underline{81.39} & \textbf{71.60} & 63.08 & \underline{65.75} & 61.74 \\
 & & (-0.11) & (+1.31) & (-0.82) & (+1.10) & (-1.02) & (+1.11) & (+1.18) & (+3.90) & (-0.19) \\
Lorentz & Euclidean & 69.27 & 69.05 & \underline{69.38} & \underline{73.46} & 80.29 & \underline{70.17} & \underline{63.12} & 61.53 & \underline{63.91} \\
 & & (-0.95) & (-1.17) & (-0.84) & (+2.72) & (+1.20) & (+3.46) & (+1.81) & (-2.78) & (+4.10) \\
Poincaré   & Euclidean & \textbf{72.35} & \underline{72.31} & \textbf{72.37} & 72.66 & \textbf{81.51} & 68.38 & \textbf{64.92} & \textbf{65.95} & \textbf{64.40} \\
 & & (+0.83) & (-0.25) & (+1.37) & (+4.35) & (+6.05) & (+3.53) & (+5.56) & (+6.57) & (+5.06) \\
\bottomrule
\end{tabular}
}
\end{table*}

\begin{table}[t]
\caption{Mean homogeneity of the GCD models on the FGVC Aircraft dataset when varying the ground truth label granularity. Lorentz contrastive training refers to HC-GCD while Poincaré refers to Hyp-GCD. The highest homogeneity for each column is highlighted in bold, while the second highest is underlined.}
\label{tab:hgcd-homogeneity-mean}
\resizebox{\linewidth}{!}{
\begin{tabular}{c|c|ccc}
\toprule
Contrastive & K-Means   & \multicolumn{3}{c}{FGVC-Aircraft Labels}  \\
Training    & Algorithm & Manufacturer & Family & Variant\\
\midrule
Lorentz & Lorentz & \textbf{90.46} & \textbf{89.09} & \textbf{82.48} \\
Lorentz & Euclidean & \underline{89.99} & \underline{88.62} & \underline{82.41} \\
Poincaré & Euclidean & 87.14 & 85.78 & 80.09 \\
\bottomrule
\end{tabular}
}
\end{table}
\paragraph{ViT with Registers} The code provided by \citet{Liu2025HypCD} for their Hyp-GCD model differed from the original GCD model in two ways:
\begin{itemize}
    \item The ViT model utilizes registers, following the setup by \citet{darcet2024visiontransformersneedregisters}.
    \item The projection MLP is missing the last layer, making it similar to the one used in SimGCD by \citet{wen2023simgcd}:
    \begin{equation}
    \mathbb{R}^I\xrightarrow{GELU}\mathbb{R}^{2048}\xrightarrow{GELU}\mathbb{R}^{2048}\xrightarrow{GELU}\mathbb{R}^{256}
    \end{equation}
\end{itemize}
Therefore, an ablation study is performed to evaluate the impact of these changes on the HC-GCD model. Table \ref{tab:ablation-registers-mean} compares the results from the original setup, and the setup using ViT with registers. It can be seen that the Hyp-GCD training setup gains accuracy in all datasets when ViT with registers is used, especially with the Stanford Cars and FGVC-Aircraft datasets. Lorentz training also benefits from using ViT with registers, however, it is not the case in all datasets, as CUB performs better without registers. Regardless, Lorentz training benefits from using the hyperbolic K-Means algorithm regardless of the backbone used.

\paragraph{Consistency Across Label Granularity} A key motivation for using hyperbolic geometry is that the space more naturally encodes hierarchal structures. We investigate this by varying the label granularity and measuring the number of elements in the predicted clusters which belong to ground truth labels. We determine this using the FGVC Aircraft dataset, constructing coarse to fine-grained labels by using the Manufacturer, Family, and Variant labels for data points. The Homogeneity \citep{Homogeneity} metric is used to quantify the clustering performance, as shown in Table \ref{tab:hgcd-homogeneity-mean}. We find that HC-GCD generates more consistent clusters across all label granularities, outperforming Hyp-GCD by several percentage points. We also find that performing K-Means in hyperbolic space leads to the best performance across all granularity levels, indicating a clear benefit of clustering in hyperbolic space.

\paragraph{Hyperbolic K-Means in Poincaré} An ablation is devised to test the performance of performing K-Means on hyperbolic embeddings from the Hyp-GCD model. To achieve that, a hyperbolic K-Means algorithm for the Poincaré model is used based on the Poincaré distance and the Einstein Midpoint as a centroid. To calculate the Einstein Midpoint, the embeddings in the Poincaré model are transformed to the Klein model, then the calculated midpoint is transformed back to the Poincaré model. This is chosen due to the fact that there are no closed form midpoints in the Poincaré model.

It is possible to compare the hyperbolic K-Means between the Lorentz Hyperboloid model and the Poincaré model due to the distance and centroid being equivalent. The distance functions are equivalent due to the two models being isometric \citep{ratcliffe1994foundations}, and the centroids are equivalent due to the Einstein Midpoint corresponding to the Lorentz Centroid as per Theorem \ref{th:Midpoint-Equivalence}.

We find that clustering in Poincaré space results in close to zero accuracy on all datasets. Upon further inspection we discovered that all samples were always assigned to a single centroid. Therefore, we conclude that further work is needed in order for K-Means in Poincaré to be made feasible. 

\section{\uppercase{Limitations}}
In this paper we have demonstrated the feasibility and benefits of performing clustering in hyperbolic space. However, our work is not without limitations. We specifically highlight two key limitations.
Firstly, our work is missing an adaptation of leading non-parametric methods within the GCD field, such as Hyp-SelEx. These were excluded as we focused on isolating the effect of hyperbolic clustering, with no bells or whistles.
Secondly, we have not demonstrated the effectiveness of hyperbolic clustering in the Poincaré hyperbolic space. This warrants a deeper future study into the feasibility of hyperbolic clustering across different hyperbolic geometry models.

\section{\uppercase{Conclusion}}

Throughout this paper we have investigated whether using hyperbolic representation learning and clustering in the Lorentz Hyperboloid model is beneficial for the GCD task. This is due to the fact that previous hyperbolic methods choose to remove the hyperbolic exponential map during inference and cluster directly on Euclidean embeddings, which may lead to misrepresentation of the learned hierarchical structures. To that end, we adapt the K-Means algorithm to hyperbolic geometry, and adapt the original GCD model by \citet{Vaze_2022_CVPR} creating the Hyperbolic Clustered GCD (HC-GCD) method. We test our method using both hyperbolic and Euclidean K-Means, while comparing to the Hyp-GCD model by Liu et al, which trains in the Poincaré model of hyperbolic geometry.

Our results show that using the Lorentz Hyperboloid model for representation learning performs on par with Hyp-GCD, while also showing that hyperbolic clustering is necessary for improved performance when training in the Lorentz Hyperboloid model. Furthermore, our ablations showed that clustering directly in hyperbolic space leads to more consistent predictions across label granularity, as the model trained in the Lorentz Hyperboloid and clustered with hyperbolic K-Means achieved the highest homogeneity on all FGVC-Aircraft label classes.

\section*{\uppercase{Acknowledgments}} This work was supported by the Pioneer Centre for AI (DNRF grant number P1).

\bibliographystyle{apalike}
{\small
\bibliography{example.bib}}

\section*{\uppercase{Appendix}}

\setcounter{lemma}{0}
\begin{lemma}
    The function mapping points from the Klein model to points on the Lorentz Hyperboloid is:
    \begin{equation}
        \pi_{\mathbb{K}\rightarrow\mathbb{H}}(\mathbf{x}_{\mathbb{K}})=\frac{1}{\sqrt{\kappa-\kappa^2\|\mathbf{x}_{\mathbb{K}}\|^2}}[1;\sqrt{\kappa}\mathbf{x}_{\mathbb{K}}]
    \end{equation}
\end{lemma}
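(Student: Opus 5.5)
We prove Lemma 1 by inverting the Lorentz-to-Klein map of Equation \ref{eq:Lorentz-Klein-Map}, using the hyperboloid constraint of Equation \ref{eq:Lorentz-Time} to fix the remaining degree of freedom. Starting from a Klein point $\mathbf{x}_{\mathbb{K}}$, Equation \ref{eq:Lorentz-Klein-Map} gives $\mathbf{x}_{space}=\sqrt{\kappa}\,x_{time}\,\mathbf{x}_{\mathbb{K}}$. So the preimage lies on the ray $x_{time}[1;\sqrt{\kappa}\mathbf{x}_{\mathbb{K}}]$, and only the scalar $x_{time}>0$ remains to be determined.

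To find it, we substitute $\mathbf{x}_{space}=\sqrt{\kappa}\,x_{time}\mathbf{x}_{\mathbb{K}}$ into $\langle\mathbf{x},\mathbf{x}\rangle_{\mathbb{L}}=-1/\kappa$. This gives
\begin{equation*}
-x_{time}^2+\kappa x_{time}^2\|\mathbf{x}_{\mathbb{K}}\|^2=-\frac{1}{\kappa},
\qquad\text{so}\qquad
x_{time}^2=\frac{1}{\kappa(1-\kappa\|\mathbf{x}_{\mathbb{K}}\|^2)}=\frac{1}{\kappa-\kappa^2\|\mathbf{x}_{\mathbb{K}}\|^2}.
\end{equation*}
The Klein domain condition $\|\mathbf{x}_{\mathbb{K}}\|^2<1/\kappa$ makes the denominator positive. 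Taking the positive root, as required by $x_0>0$ for the upper sheet, yields exactly the claimed expression $\pi_{\mathbb{K}\rightarrow\mathbb{H}}(\mathbf{x}_{\mathbb{K}})=\frac{1}{\sqrt{\kappa-\kappa^2\|\mathbf{x}_{\mathbb{K}}\|^2}}[1;\sqrt{\kappa}\mathbf{x}_{\mathbb{K}}]$.

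It remains to check that this map really inverts $\pi_{\mathbb{H}\rightarrow\mathbb{K}}$ in both directions. Composing with Equation \ref{eq:Lorentz-Klein-Map} cancels the scalar prefactor and returns $\sqrt{\kappa}\mathbf{x}_{\mathbb{K}}/\sqrt{\kappa}=\mathbf{x}_{\mathbb{K}}$. In the other direction, the uniqueness of the positive root shows that every hyperboloid point is recovered from its Klein image. Corollary 1 then follows by multiplying the first coordinate by $\sqrt{\kappa}$ and comparing with the definition of $\gamma_i$, since $\sqrt{\kappa}/\sqrt{\kappa-\kappa^2\|\mathbf{x}_{\mathbb{K}}\|^2}=1/\sqrt{1-\kappa\|\mathbf{x}_{\mathbb{K}}\|^2}$.

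The computation is elementary, so the main obstacle is bookkeeping of the curvature constants. The paper writes $\kappa=-c>0$, and the factors $\sqrt{\kappa}$ in Equation \ref{eq:Lorentz-Klein-Map} and in the radius $1/\sqrt{\kappa}$ must match consistently. I would double-check the sign choice and verify that $\|\mathbf{x}_{\mathbb{K}}\|<1/\sqrt{\kappa}$ corresponds precisely to the image of the upper sheet.
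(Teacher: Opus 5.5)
Your proof is correct and follows essentially the same route as the paper. Both set $\mathbf{x}_{space}=\sqrt{\kappa}\,x_{time}\mathbf{x}_{\mathbb{K}}$, impose the hyperboloid constraint, solve $x_{time}^2=1/(\kappa-\kappa^2\|\mathbf{x}_{\mathbb{K}}\|^2)$ and take the positive root; the paper uses the equivalent form $x_{time}=\sqrt{1/\kappa+\|\mathbf{x}_{space}\|^2}$, whereas you substitute directly into $\langle\mathbf{x},\mathbf{x}\rangle_{\mathbb{L}}=-1/\kappa$, which also avoids the paper's cancellation of $\mathbf{x}_{\mathbb{K}}$ from both sides (not valid at $\mathbf{x}_{\mathbb{K}}=\mathbf{0}$).
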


\begin{proof}
    Using the property $x_{time}=\sqrt{1/\sqrt{\kappa}+\|\mathbf{x}_{space}\|^2}$, we can rewrite Equation \ref{eq:Lorentz-Klein-Map}, mapping points from the Lorentz Hyperboloid to the Klein model, as:
    \begin{equation}
        \mathbf{x}_{\mathbb{K}}=\frac{\mathbf{x}_{space}}{\sqrt{\kappa}x_{time}}=\frac{\mathbf{x}_{space}}{\sqrt{\kappa}\sqrt{1/\kappa+\|\mathbf{x}_{space}\|^2}}
    \label{eq:KL-Map-Proof1}
    \end{equation}
    On the other hand, by rearranging the same mapping function, we get:
    \begin{equation}
        \mathbf{x}_{space}=\sqrt{\kappa}x_{time}\mathbf{x}_{\mathbb{K}}
    \label{eq:KL-Map-Proof2}
    \end{equation}
    By replacing $\mathbf{x}_{space}$ in Equation \ref{eq:KL-Map-Proof1} with the value in Equation \ref{eq:KL-Map-Proof2} we get:
    \begin{gather}
        \mathbf{x}_{\mathbb{K}}=\frac{\sqrt{\kappa}x_{time}}{\sqrt{\kappa}\sqrt{1/\kappa+\|\sqrt{\kappa}x_{time}\mathbf{x}_{\mathbb{K}}\|^2}}\mathbf{x}_{\mathbb{K}}\\
        \mathbf{x}_{\mathbb{K}}=\frac{x_{time}}{\sqrt{1/\kappa+\kappa x^2_{time}\|\mathbf{x}_{\mathbb{K}}\|^2}}\mathbf{x}_{\mathbb{K}}
    \end{gather}
    Which means that:
    \begin{gather}
        1=\frac{x_{time}}{\sqrt{1/\kappa+\kappa x^2_{time}\|\mathbf{x}_{\mathbb{K}}\|^2}}\\
        1/\kappa+\kappa x^2_{time}\|\mathbf{x}_{\mathbb{K}}\|^2=x^2_{time}\\
        1/\kappa=(1-\kappa\|\mathbf{x}_{\mathbb{K}}\|^2)x^2_{time}\\
        \frac{1}{\kappa(1-\kappa\|\mathbf{x}_{\mathbb{K}}\|^2)}=x^2_{time}
    \end{gather}
    \begin{equation}
        \frac{1}{\sqrt{\kappa-\kappa^2\|\mathbf{x}_{\mathbb{K}}\|^2}}=x_{time}
    \label{eq:x_time_from_Klein}
    \end{equation}
    By inserting the value of $x_{time}$ into Equation \ref{eq:KL-Map-Proof2} we get:
    \begin{equation}
        \mathbf{x}_{space}=\sqrt{\kappa}\frac{1}{\sqrt{\kappa-\kappa^2\|\mathbf{x}_{\mathbb{K}}\|^2}}\mathbf{x}_{\mathbb{K}}
    \label{eq:x_space_from_Klein}
    \end{equation}
    Hence, using the definitions of $x_{time}$ and $\mathbf{x}_{space}$ from Equations \ref{eq:x_time_from_Klein} and \ref{eq:x_space_from_Klein} respectively the mapping function becomes:
    \begin{equation}
        \pi_{\mathbb{K}\rightarrow\mathbb{H}}(\mathbf{x}_{\mathbb{K}})=\frac{1}{\sqrt{\kappa-\kappa^2\|\mathbf{x}_{\mathbb{K}}\|^2}}[1;\sqrt{\kappa}\mathbf{x}_{\mathbb{K}}]
    \end{equation}
\end{proof}

\clearpage
\newpage
\end{document}